\newcommand{\executeiffilenewer}[3]{%
\ifnum\pdfstrcmp{\pdffilemoddate{#1}}%
{\pdffilemoddate{#2}}>0%
{\immediate\write18{#3}}\fi%
}
\algrenewcommand\algorithmicdo{}
\newtheorem{theorem}{Theorem}
\newtheorem{proposition}{Proposition}
\newtheorem{corollary}{Corollary}
\theoremstyle{definition}
\newtheorem{definition}{Definition}
\newtheoremstyle{assume}
  {3pt}% measure of space to leave above the theorem. e.g.: 3pt
  {3pt}% measure of space to leave below the theorem. e.g.: 3pt
  {}% name of font to use in the body of the theorem
  {}% measure of space to indent
  {\bf}% name of head font
  {}% punctuation between head and body
  { }% space after theorem head; " " = normal interword space
  {\thmname{#1}.\thmnumber{#2}\thmnote{ \textnormal{(\textit{#3})}}}% Manually specify head
\theoremstyle{assume}
\newtheorem{assumption}{A}
\DeclareMathOperator{\E}{\mathbb{E}}
\DeclareMathOperator*{\minimize}{minimize}
\DeclareMathOperator{\subjectto}{subject\ to}
\newcommand{\norm}[1]{\ensuremath{\left\| #1 \right\|}}
\newcommand{\abs}[1]{\ensuremath{{\left\vert #1 \right\vert}}}
\newcommand{\calD}{\ensuremath{\mathcal{D}}}
\newcommand{\calF}{\ensuremath{\mathcal{F}}}
\newcommand{\calH}{\ensuremath{\mathcal{H}}}
\newcommand{\calL}{\ensuremath{\mathcal{L}}}
\newcommand{\calO}{\ensuremath{\mathcal{O}}}
\newcommand{\calP}{\ensuremath{\mathcal{P}}}
\newcommand{\bx}{\ensuremath{\bm{x}}}
\newcommand{\btheta}{\ensuremath{\bm{\theta}}}
\newcommand{\blambda}{\ensuremath{\bm{\lambda}}}
\newcommand{\setR}{\ensuremath{\mathbb{R}}}
\def\st/{\textsuperscript{st}}
\def\nd/{\textsuperscript{nd}}
\def\rd/{\textsuperscript{rd}}
\def\th/{\textsuperscript{th}}
\def\nnil{\nil}
\newcounter{prob}
\newenvironment{prob}[1][\nil]{%
	\def\tmp{#1}
	\equation
	\ifx\tmp\nnil
		\refstepcounter{prob}
		\tag{P\Roman{prob}}
	\else
		\tag{\tmp}
	\fi
	\aligned%
}{%
	\endaligned\endequation%
}
\title{The empirical duality gap of constrained statistical learning}
\name{Luiz~F.~O.~Chamon, Santiago~Paternain, Miguel~Calvo-Fullana and Alejandro~Ribeiro}
\address{Electrical and Systems Engineering, University of Pennsylvania\\
{\small e-mail: \mbox{\texttt{\{luizf,spater,cfullana,aribeiro\}@seas.upenn.edu}}}}
\begin{document}
\ninept
\maketitle
\begin{abstract}

This paper is concerned with the study of constrained statistical learning problems, the unconstrained version of which are at the core of virtually all of modern information processing. Accounting for constraints, however, is paramount to incorporate prior knowledge and impose desired structural and statistical properties on the solutions. Still, solving constrained statistical problems remains challenging and guarantees scarce, leaving them to be tackled using regularized formulations. Though practical and effective, selecting regularization parameters so as to satisfy requirements is challenging, if at all possible, due to the lack of a straightforward relation between parameters and constraints. In this work, we propose to directly tackle the constrained statistical problem overcoming its infinite dimensionality, unknown distributions, and constraints by leveraging finite dimensional parameterizations, sample averages, and duality theory. Aside from making the problem tractable, these tools allow us to bound the empirical duality gap, i.e., the difference between our approximate tractable solutions and the actual solutions of the original statistical problem. We demonstrate the effectiveness and usefulness of this constrained formulation in a fair learning application.

\end{abstract}
\begin{keywords}
Constrained statistical learning, empirical risk minimization, duality gap, nonconvex optimization
\end{keywords}

\section{Introduction}
	\label{S:intro}

Statistical optimization problems are at the core of contemporary signal processing, machine learning, and statistical methods, from compressive sensing to image recognition to modern Bayesian methods~\cite{Eldar12c, Foucart13m, Rasmussen05g, mlbook}. Central to solving these problems is the concept of empirical risk minimization~(ERM), in which statistical quantities~(expectations) are replaced by their empirical counterparts~(sample averages) allowing the problem to be solved from data, without knowledge of the underlying distributions involved~\cite{mlbook, vapnik2013nature}. As such, ERM problems lie at the frontier of optimization and statistics. This approach is grounded on celebrated consistency results from learning theory that show that ERM solutions approach that of their statistical analog as the number of samples increases~\cite{vapnik2013nature, bousquet2003new}.

A limitation of the learning theory developed for ERMs is that it focuses on unconstrained optimization problems. However, modern information processing challenges require the ability to impose constraints so to incorporate prior knowledge via structural properties of the problem, such as smoothness or sparsity~\cite{wahba1990spline, berlinet2011reproducing, Eldar12c, Foucart13m}, or to deal with semi-supervised problems in which the data may be partially labeled~\cite{Cour, Yu2017, Nam}. Simultaneously, constraints can also describe desired properties of the solution, as is the case in risk-aware and fair learning~\cite{donini2018empirical, Bilal, Kalogerias2018b, Vitt2018, Zhou2018}.

Often, these constrained statistical problems are tackled by means of regularized formulations, i.e., by integrating a fixed constraint violation cost in the objective. While \emph{a priori} this is a reasonable approach, selecting this cost in practice may be challenging, requiring not only domain expertise but also mastering of the optimization algorithm of choice. This is due to the fact that there is typically no straightforward relation between the property we wish to embed in the solution~(i.e., the constraint we wish to impose) and the value of the regularization cost. This issue is only aggravated as the number of constraints grows. To this end, explicit constraints provide a clearer, more transparent approach to describing \emph{desiderata}.

These issues only exacerbate as the model complexity increases. Indeed, typical modern learning solutions involve non-convex, high dimensional parameterizations for which the relation between parameters and statistical properties is quite obscure. This is the case, for instance, of (convolutional/graph) neural networks~(NNs). Several heuristic constraints on the parameters of these architectures have been proposed, together with projection or conjugate gradient-type algorithms, to try and encode desirable features~\cite{Ravi, Karras}. In contrast, we wish to directly impose constraints on the statistical problem rather than attempt to induce structure on the parameterization in the hope that the solutions will then exhibit the desired properties.

In this work, we show that the solution of constrained, functional statistical learning problems can be approximated from data by using rich parameterizations, a result akin to the existing generalization bounds for ERM problems. We do so while directly account for the presence of constraints in the problem instead of relying on a hand-tuned regularization approach. Our main result is a bound on the empirical duality gap, i.e., the difference between the optimal value of the statistical problem and that of a particular finite dimensional, deterministic problem. This provides not only a way to perform near-optimal constrained learning, but also shows that solving constrained statistical problems is not harder than solving unconstrained ones.

We approach this result in two steps. First, we bound the \emph{parameterization gap}, i.e., the loss in optimality due to solving the problem using a finite dimensional parameterization rather than in the original functional space~(Section~\ref{S:parameterization}). We show that under mild conditions the solution of a specific saddle-point parameterized problem is close to the original functional one. This result depends on how good the parameterization is and not the specific parameterization, holding for a wide range of function classes including reproducing kernel Hilbert spaces~(RKHS) and NNs. We then proceed to bound the so-called \emph{empirical gap}, i.e., the error due to the use of samples instead of the unknown data distribution~(Section~\ref{S:erm}). The final bound on the empirical duality gap~(Theorem~\ref{T:main}) has a familiar form and depends not only on the number of samples, but also on the complexity of the statistical problem both in terms of the parametrization and how hard its constraints are to satisfy. We conclude with a numerical example in the context of fair learning to illustrate these results.

\section{Constrained statistical learning}
\label{S:csl}

Let~$\calD$ denote an \emph{unknown} joint probability distribution over pairs~$(\bx,y)$, with~$\bx \in \setR^d$ and~$y \in \setR$, and let~$\calF$ be a space of functions~$\phi: \mathbb{R}^d \to \mathbb{R}$, such as the space of continuous functions or~$L_2$. For convenience, we can interpret~$\bx$ as a feature vector or a system input, $y$ as a label or a measurement, and~$\phi$ as a classifier or estimator. We are interested in solving the constrained statistical learning problem
\begin{prob}[$\text{P-CSL}$]\label{P:csl}
	&&P^\star \triangleq \min_{\phi\in\calF}&
		&&\E_{(\bx,y) \sim \calD} \left[\ell_0(\phi(\bx),y)\right]
	\\
	&&\subjectto& &&\E_{(\bx,y) \sim \calD}\left[\ell_i(\phi(\bx),y)\right]
		\leq c_i \text{,}\ \ i = 1,\ldots,m
		\text{,}
\end{prob}
where~$\ell_0$ is a performance metric and~$\ell_i$ are functions that, together with the~$c_i$, encode the desired statistical properties of the solution of~\eqref{P:csl}. Observe that the unconstrained version of~\eqref{P:csl}, namely
\begin{prob}\label{P:sl}
	\minimize_{\phi\in\calF}&
		&&\mathbb{E}_{(\bx,y) \sim \calD} \left[\ell_0(\phi(\bx,y))\right]
		\text{,}
\end{prob}
has a long history in signal processing, statistics, and machine learning, being at the core of celebrated Bayesian estimators, such as LMS adaptive filters and Kalman filters, and virtually every modern learning algorithm~\cite{Kailath00l, Sayed08a, Rasmussen05g, mlbook}. Its success comes from the fact that, under mild conditions, the solution of the empirical version of~\eqref{P:sl}, i.e., one where the expectation is replaced by the sample mean over~$N$ realizations~$(\bx_n,y_n) \sim \calD$, converges rapidly to the solution of~\eqref{P:sl} as~$N$ grows~\cite[Section 3.4]{vapnik2013nature},\cite{bousquet2003new, mlbook}. For the sake of clarity, we omit the distribution~$\calD$ over which the expectations are taken in the remainder of this paper.

Two key challenges hinder the solution of~\eqref{P:csl} in general: (i)~it is an infinite dimensional problem, since it optimizes over a functional space and (ii)~we cannot evaluate its expectations since we do not have access to~$\calD$. The first issue can be handled by using a finite dimensional parameterization of~(a subset of) the space~$\calF$. Explicitly, we associate to each~$\btheta \in \calH \subseteq \setR^p$ a function~$f(\btheta,\cdot) \in \calF$, so that the statistical problem~\eqref{P:csl} is no longer over functions~$\phi \in \calF$, but over parameters~$\btheta \in \calH$. Here, $\calH$ is a convex set of admissible parameters. As for issue~(ii), it can be addressed as in the unconstrained case, i.e., by replacing expectations by their sample mean. However, since classical ERM theory gives guarantees for the unconstrained~\eqref{P:sl}~\cite[Section 3.4]{vapnik2013nature},\cite{bousquet2003new}, one often settles for a regularized version of~\eqref{P:csl} in which a fixed cost on the value of the constraints is included in the objective. Explicitly,
\begin{prob}\label{P:regularized}
	\minimize_{\btheta \in \calH}\ 
		\frac{1}{N} \sum_{n = 1}^N \left[ \ell_0(f(\btheta,\bx_n),y_n)
		+ \sum_{i = 1}^m w_i \ell_i(f(\btheta,\bx_n),y_n)
		\right]
		\text{,}
\end{prob}
where~$w_i \geq 0$ denote the weight of each constraint~(regularization parameter).

From a practical point-of-view, \eqref{P:regularized} has several advantages. In particular, we can directly use gradient descent methods to converge to a local minimum of its objective. Moreover, it is well-known that rich parameterizations such as NNs can be trained so as to make the objective of~\eqref{P:regularized} vanish for several commonly used~$\ell_0,\ell_i$~\cite{zhang2016understanding, arpit2017closer} or that all local minima are in fact global~\cite{soltanolkotabi2018theoretical, ge2017learning, brutzkus2017globally}. Nevertheless, there is no guarantee that the solution of~\eqref{P:regularized} is close to that of~\eqref{P:csl}. In fact, there may not even exist a set of weights~$w_i$ such that a solution of~\eqref{P:regularized} is \eqref{P:csl}-feasible. Even if it does, there is typically no straightforward way to find them. This is less of an issue if the specifications~$c_i$ and the weights~$w_i$ are obtained from data using, e.g., cross-validation. Still, the interpretation of the former is more transparent than the latter.

The goal of this work is to provide a systematic approach to obtain near-optimal solutions of~\eqref{P:csl} that converge to the optimum as the parameterization of~$\calF$ becomes richer and the number of samples from~$\calD$ grows. What is more, since this approach yields an optimization problem that, though distinct in nature, resembles~\eqref{P:regularized}, we effectively show that constrained statistical learning problems are in fact as easy~(or as hard) to solve as their unconstrained counterparts. To do so, we first analyze the \emph{parameterization gap}, showing that for rich parameterizations~(e.g., NNs), the optimality cost of parameterizing is quantifiably small~(Section~\ref{S:parameterization}). Then, we employ classical Vapnik-Chervonenkis~(VC) generalization results to bound the distance between the statistical and the empirical version of these problem~(Section~\ref{S:erm}). We then conclude by combining these bounds into a single empirical duality gap~(Section~\ref{S:empirical_duality}) that characterizes the difference between the solutions of the original constrained statistical problem and its tractable finite dimensional, deterministic version.

Before proceeding, we collect our assumptions on the functions from~\eqref{P:csl}. These are typical and met by a myriad of commonly used losses and cost functions~\cite{mlbook}.

\begin{assumption}\label{A:losses}

The functions~$\ell_0,\ell_i$ are non-negative, $B$-bounded, i.e., $[-B,B]$-valued, and~$L$-Lipschitz convex functions.

\end{assumption}

\section{The parameterization gap}
\label{S:parameterization}

On our path to obtain a practical solution for~\eqref{P:csl}, we begin by addressing issue~(i) from Section~\ref{S:csl}, namely, the fact that~\eqref{P:csl} is an infinite dimensional optimization problem. As is typical, we do so by leveraging a finite dimensional parameterization of~$\calF$ whose richness we characterize according to the following definition:

\begin{definition}\label{D:epsilon}

For~$\epsilon > 0$, $f$ is said to be an~$\epsilon$-parametrization of~$\calF$ if for each~$\phi \in \calF$ there exist~$\btheta \in \calH$ such that
\begin{equation}
	\E \left[ \abs{f(\btheta,\bx)-\phi(\bx)} \right] \leq \epsilon.
\end{equation}

\end{definition}

\noindent Note that Definition~\ref{D:epsilon} requires the set~$\calP = \{f(\btheta,\cdot) \mid \btheta \in \calH\}$ to be an $\epsilon$ cover of~$\calF$ in the total variation norm induced by the distribution of the data. This is considerably milder than the typical uniform approximation properties enjoyed by parameterization such as RKHSs or NNs~\cite{cybenko1989approximation, hornik1991approximation, hornik1989multilayer, berlinet2011reproducing}.

Given an~$\epsilon$-parameterization, we propose to tackle~\eqref{P:csl} through the saddle-point problem
\begin{prob}[$\widehat{\text{DI}}_{\epsilon}$]\label{P:dual_param}
	D^\star_\epsilon = \max_{\blambda \in \setR^m_+}\ \min_{\btheta\in\calH}\ 
		\calL\left( f(\btheta,\cdot),\blambda \right),
\end{prob}
where~$\calL$ is the Lagrangian of~\eqref{P:csl} defined as
\begin{equation}\label{E:lagrangian}
	\calL(\phi, \blambda) \triangleq \E \left[\ell_0(\phi(\bx),y)\right]
		+ \sum_{i = 1}^m \lambda_i \left(\E \left[\ell_i(\phi(\bx),y)\right]- c_i \right)
		\text{,}
\end{equation}
where $\lambda\in\mathbb{R}_+^m$ is a vector that collects the dual variables~$\lambda_i$.
Observe that~\eqref{P:dual_param} is nothing more than the dual problem of~\eqref{P:csl} solved over the parameterized function class~$\calP$. The parameterization gap compares the value of the original problem~\eqref{P:csl} and its parameterized dual version~\eqref{P:dual_param}, explicitly~$D^\star_\epsilon - P^\star$. If it is small, then the value of~\eqref{P:dual_param} is similar to that of~\eqref{P:csl}. As we show in the sequel, this also implies that solutions of~\eqref{P:dual_param} meet the constraints in~\eqref{P:csl}. Naturally, this can only occur if~$\calP$ has at least one function that is \eqref{P:csl}-feasible. In fact, in order to bound the parameterization gap, we need a more stringent condition:

\begin{assumption}\label{A:slater}

There exists a~$\btheta^\dagger \in \calH$ such that~$\E\left[\ell_i(f(\btheta^\dagger,\bx),y)\right] < c_i - L \epsilon $, for~$i = 1,\ldots,m$.

\end{assumption}

We can can then state the main result of this section:
\begin{proposition}\label{T:param}
Under assumptions~\ref{A:losses} and~\ref{A:slater}, if~$f$ is an~$\epsilon$-parameterization of~$\calF$, then
\begin{equation}\label{E:bound_param}
	P^\star \leq D^\star_{\epsilon} \leq P^\star +
		\left( 1 + \norm{\blambda_p^\star}_1 \right) L \epsilon
		\text{,}
\end{equation}
for~$P^\star$ and~$D^\star_\epsilon$ defined as in~\eqref{P:csl} and~\eqref{P:dual_param} respectively and where~$\blambda_p^\star$ are the dual variables of~\eqref{P:csl} with the constraints tightened to~$c_i - L\epsilon$.
 
\end{proposition}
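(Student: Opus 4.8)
The plan is to sandwich $D^\star_\epsilon$ between $P^\star$ and $P^\star + (1 + \norm{\blambda_p^\star}_1)L\epsilon$ by combining three ingredients: strong duality for the functional problem~\eqref{P:csl}, the fact that~\eqref{P:dual_param} restricts the inner minimization to the subclass $\calP = \{f(\btheta,\cdot)\mid\btheta\in\calH\}\subseteq\calF$, and the $\epsilon$-parameterization property transported through the Lipschitz continuity of the losses.

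\textbf{Lower bound.} First I would observe that~\eqref{P:csl} is a convex program in $\phi$: the evaluation map $\phi \mapsto \phi(\bx)$ is linear, $\ell_0,\ell_i$ are convex by Assumption~\ref{A:losses}, expectation preserves convexity, and $\calF$ is a vector space. Assumption~\ref{A:slater} furnishes a point $f(\btheta^\dagger,\cdot)\in\calF$ that is strictly feasible, since $\E[\ell_i(f(\btheta^\dagger,\bx),y)] < c_i - L\epsilon < c_i$, so Slater's condition holds and strong duality yields $P^\star = D^\star := \max_{\blambda\succeq\bzero}\min_{\phi\in\calF}\calL(\phi,\blambda)$. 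Because $\calP\subseteq\calF$, for every fixed $\blambda\succeq\bzero$ we have $\min_{\btheta\in\calH}\calL(f(\btheta,\cdot),\blambda)\geq\min_{\phi\in\calF}\calL(\phi,\blambda)$, and taking the supremum over $\blambda$ gives $D^\star_\epsilon \geq D^\star = P^\star$, which is the left inequality in~\eqref{E:bound_param}.

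\textbf{Upper bound.} For the right inequality I would work with the tightened problem, with constraints $\E[\ell_i(\phi(\bx),y)]\leq c_i - L\epsilon$ and optimal value $P^\star_{-L\epsilon}$; Assumption~\ref{A:slater} is exactly Slater's condition for it, so it too has zero duality gap, with optimal multipliers $\blambda_p^\star$. Writing its Lagrangian as $\calL(\phi,\blambda) + L\epsilon\norm{\blambda}_1$ (valid for $\blambda\succeq\bzero$) and evaluating at $\blambda_p^\star$ gives $P^\star_{-L\epsilon} = \min_{\phi\in\calF}\calL(\phi,\blambda_p^\star) + L\epsilon\norm{\blambda_p^\star}_1 \leq P^\star + L\epsilon\norm{\blambda_p^\star}_1$, the inequality being weak duality for~\eqref{P:csl}. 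Next, let $\phi^\circ$ be a (near-)optimal solution of the tightened problem and $\btheta^\circ\in\calH$ the parameter from Definition~\ref{D:epsilon}, so $\E[\abs{f(\btheta^\circ,\bx)-\phi^\circ(\bx)}]\leq\epsilon$. Jensen's inequality and the $L$-Lipschitz continuity of the losses give $\abs{\E[\ell_j(f(\btheta^\circ,\bx),y)] - \E[\ell_j(\phi^\circ(\bx),y)]}\leq L\epsilon$ for $j = 0,\dots,m$; hence $\btheta^\circ$ satisfies the \emph{original} constraints, $\E[\ell_i(f(\btheta^\circ,\bx),y)]\leq (c_i - L\epsilon) + L\epsilon = c_i$, and has objective value at most $P^\star_{-L\epsilon} + L\epsilon$. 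Consequently, for every $\blambda\succeq\bzero$, $\min_{\btheta\in\calH}\calL(f(\btheta,\cdot),\blambda)\leq\calL(f(\btheta^\circ,\cdot),\blambda)\leq\E[\ell_0(f(\btheta^\circ,\bx),y)]\leq P^\star_{-L\epsilon} + L\epsilon$, dropping the nonpositive constraint terms; maximizing over $\blambda$ and chaining with the previous display gives $D^\star_\epsilon \leq P^\star_{-L\epsilon} + L\epsilon \leq P^\star + (1 + \norm{\blambda_p^\star}_1)L\epsilon$.

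\textbf{Main obstacle.} The delicate step is justifying strong duality for the infinite-dimensional problem~\eqref{P:csl} and its tightened version: one must invoke a convex duality theorem valid in a general vector space with finitely many inequality constraints, and to select $\phi^\circ$ cleanly one may need the usual $\delta$-approximate-minimizer argument (taking $\delta\to 0$ at the end) in case the infima are not attained. Everything else is the routine propagation of the $\epsilon$-bound through Lipschitz continuity together with elementary monotonicity of $\min$ over subsets and of $\max$ over pointwise-ordered functions.
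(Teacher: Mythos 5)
Your proof is correct, and while it passes through the same intermediate object as the paper — the tightened problem with constraints $c_i - L\epsilon$ and its multipliers $\blambda_p^\star$ — the mechanism for the key inequality $D^\star_\epsilon \leq P^\star_{-L\epsilon} + L\epsilon$ is genuinely different. The paper works on the dual side: it adds and subtracts $\min_{\phi\in\calF}\calL(\phi,\blambda)$ inside the max--min, applies H\"older's inequality to the vector of loss differences $\E[\ell_i(f(\btheta,\bx),y)-\ell_i(\phi(\bx),y)]$ (uniformly in $\blambda$), recognizes the resulting expression as the dual $D^\star_p$ of the tightened problem, and then exploits two saddle-point relations. You instead argue on the primal side: you exhibit a single parameter $\btheta^\circ$ approximating the tightened problem's (near-)optimizer, show via the Lipschitz transfer that it is feasible for the \emph{original} constraints with objective value at most $P^\star_{-L\epsilon}+L\epsilon$, and then use the elementary fact that a dual function is bounded above by the primal objective at any feasible point. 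Your route avoids H\"older entirely and isolates where the $\epsilon$-approximation is actually consumed (a single feasible point rather than a uniform bound over the minimization), at the cost of needing the dual representation $P^\star_{-L\epsilon}=\min_\phi\calL(\phi,\blambda_p^\star)+L\epsilon\norm{\blambda_p^\star}_1$ plus weak duality of \eqref{P:csl} for the second link — steps the paper achieves equivalently through its saddle-point chain. You are also more careful than the paper on two points it elides: justifying strong duality and dual attainment for the infinite-dimensional convex programs (Slater with finitely many scalar constraints), and handling possible non-attainment of the infima via $\delta$-approximate minimizers.
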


\begin{proof}
See appendix~\ref{X:param}.
\end{proof}

Proposition~\ref{T:param} shows that for fine enough parameterizations of~$\calF$, solving the finite dimensional~\eqref{P:dual_param} yields almost the same solution as solving the variational~\eqref{P:csl}. How fine the parameterization must depends not only on the smoothness of the losses~$\ell_0,\ell_i$, but also on how hard to satisfy the constraints is. Indeed, optimal dual variables have a well-known sensitivity interpretation for convex problems: they describe how much the objective changes when the constraints are modified. Hence, if the constraints of the original statistical problem~\eqref{P:csl} are difficult to satisfy, then~\eqref{P:dual_param} will not be a good approximation unless the parameterization is very rich. On the other hand, if the constraints are easily satisfiable, i.e., practically inconsequential, then the only error we incur is that of approximating the optimal solution~$\phi^\star$ of~\eqref{P:csl}~(explicitly, $L\epsilon$).

An important corollary of Proposition~\ref{T:param} is that any solution obtained using~\eqref{P:dual_param} is~\eqref{P:csl}-feasible.

\begin{corollary}\label{T:param_feas}

Under assumptions~\ref{A:losses} and~\ref{A:slater}, the function~$f(\btheta^\star,\cdot)$ is a feasible solution of~\eqref{P:csl}, where~$\btheta^\star$ are the parameters that achieve~$D_\epsilon^\star$ in~\eqref{P:dual_param}.

\end{corollary}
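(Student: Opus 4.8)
The plan is a standard Lagrangian argument by contradiction, exploiting that a violated constraint makes the Lagrangian unbounded above in the associated dual variable. Write the constraint slacks at a parameter~$\btheta$ as~$g_i(\btheta) \triangleq \E[\ell_i(f(\btheta,\bx),y)] - c_i$, and let~$\blambda^\star$ be a dual variable attaining the outer maximum in~\eqref{P:dual_param}, so that~$\btheta^\star \in \argmin_{\btheta \in \calH} \calL(f(\btheta,\cdot),\blambda^\star)$ and~$(\btheta^\star,\blambda^\star)$ is a saddle point of~$\calL$ over~$\calH \times \setR^m_+$ (this is the sense in which~\eqref{P:dual_param} is solved). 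Suppose, towards a contradiction, that~$f(\btheta^\star,\cdot)$ violates the~$j$-th constraint, i.e.,~$g_j(\btheta^\star) > 0$.

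Before the substitution step, I would check that~$D^\star_\epsilon$ is finite and that~$\blambda^\star$ is well defined. Finiteness is immediate from Proposition~\ref{T:param}: the right-hand side of~\eqref{E:bound_param} is finite because~$P^\star \leq B$ (the point of Assumption~\ref{A:slater} is feasible and~$\ell_0$ is~$B$-bounded by Assumption~\ref{A:losses}) and~$\norm{\blambda_p^\star}_1 < \infty$ (the~$L\epsilon$-tightened version of~\eqref{P:csl} is a convex program satisfying Slater's condition by Assumption~\ref{A:slater}, hence has bounded optimal multipliers). Existence of the maximizer~$\blambda^\star$ then follows because the dual function~$d(\blambda) \triangleq \min_{\btheta \in \calH}\calL(f(\btheta,\cdot),\blambda)$, a pointwise infimum of affine functions, is concave and upper semicontinuous, and evaluating it at the strictly feasible~$\btheta^\dagger$ gives~$d(\blambda) \leq B - \delta\norm{\blambda}_1$ with~$\delta \triangleq \min_i \{c_i - \E[\ell_i(f(\btheta^\dagger,\bx),y)]\} > 0$, so~$d$ is coercive on~$\setR^m_+$.

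With these in place, the crux is the saddle-point inequality~$\calL(f(\btheta^\star,\cdot),\blambda) \leq \calL(f(\btheta^\star,\cdot),\blambda^\star)$, valid for every~$\blambda \in \setR^m_+$. Taking~$\blambda = \blambda^\star + t\be_j$ with~$t > 0$ and using that~$\calL(f(\btheta^\star,\cdot),\cdot)$ is affine in~$\blambda$, the inequality collapses to~$t\,g_j(\btheta^\star) \leq 0$, contradicting~$g_j(\btheta^\star) > 0$. Hence~$g_i(\btheta^\star) \leq 0$ for all~$i = 1,\dots,m$, and since~$\btheta^\star \in \calH$ implies~$f(\btheta^\star,\cdot) \in \calF$, the function~$f(\btheta^\star,\cdot)$ is feasible for~\eqref{P:csl}. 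The same contradiction can be phrased as: $g_j(\btheta^\star) > 0$ would force~$\calL(f(\btheta^\star,\cdot),\blambda^\star + t\be_j) \to +\infty$ as~$t \to \infty$, contradicting the finiteness of~$D^\star_\epsilon = \calL(f(\btheta^\star,\cdot),\blambda^\star)$ established above.

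The main obstacle is the middle step rather than the final one: one must guarantee that a finite maximizing multiplier~$\blambda^\star$ exists, which is exactly what the strict-feasibility margin in Assumption~\ref{A:slater} (via Proposition~\ref{T:param}) provides — without a Slater point the dual value could be infinite or the optimal multiplier could escape to infinity, and the substitution argument would be vacuous. A secondary point worth a sentence is that~$\btheta^\star$ must be read as the inner minimizer that completes a saddle point with~$\blambda^\star$; for a nonconvex parameterized problem an arbitrary element of~$\argmin_{\btheta}\calL(f(\btheta,\cdot),\blambda^\star)$ need not satisfy the constraints.
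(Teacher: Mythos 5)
Your proof is correct and follows essentially the same route as the paper's: argue by contradiction that a violated constraint would let the multiplier drive the dual value to $+\infty$, while Assumption~\ref{A:slater} together with the $B$-boundedness of $\ell_0$ gives $d(\blambda) \leq B - L\epsilon\norm{\blambda}_1$, so $D^\star_\epsilon$ is finite. Your version is in fact a bit more careful than the paper's, since you make the implicit saddle-point inequality and the existence of a finite maximizer $\blambda^\star$ explicit rather than asserting directly that $D^\star_\epsilon \to +\infty$.
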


\begin{proof}
See appendix~\ref{X:param_feas}.
\end{proof}

Despite being finite dimensional, \eqref{P:dual_param} remains a statistical problem. Hence, though Proposition~\ref{T:param} establishes that its solutions are not only~\eqref{P:csl}-feasible but also near-optimal, the issue remains of how to solve it without explicit knowledge of the distribution~$\calD$. Observe, however, that the objective of~\eqref{P:dual_param} now involves an unconstrained statistical minmization problem. In other words, we have done most of the heavy lifting already and can now rely on the generalization bounds from statistical learning to obtain a guarantee on our solution. This is the goal of the next section.

\section{The empirical gap}
\label{S:erm}

In order to solve~\eqref{P:dual_param}, we need to evaluate the Lagrangian in~\eqref{E:lagrangian}, which cannot be done without knowledge of the joint distribution~$\calD$. In practice, this issue is overcome by using realizations~$(\bx_n,y_n) \sim \calD$ to approximate the expectations by empirical averages. We therefore define the empirical Lagrangian as
\begin{equation}\label{E:empirical_lagrangian}
\begin{aligned}
	\calL_N(\btheta,\blambda) &= \frac{1}{N} \sum_{n=1}^N \bigg(
		\ell\left( f(\btheta,\bx_n),y_n \right)
	\\
	{}&+ \sum_{i=1}^m \lambda_i
			\left[ \ell_i\left( f(\btheta,\bx_n),y_n \right) - c_i \right] \bigg).
\end{aligned}
\end{equation}
and the empirical dual problem of~\eqref{P:csl} as
\begin{prob}[$\widehat{\text{DI}}_{\epsilon,N}$]\label{P:empirical_dual}
	D^\star_{\epsilon,N} = \max_{\blambda \in \setR^m_+}\ \min_{\btheta\in\calH}\ 
		\calL_N\left( f(\btheta,\cdot),\blambda \right).
\end{prob}
Note that the empirical Lagrangian has a form reminiscent of the regularized formulation in~\eqref{P:regularized}, which is appealing from a practical point-of-view since it means the minimization in~\eqref{P:empirical_dual} is not harder to solve than the regularized problem. Moreover, the dual function
\begin{equation}\label{E:empirical_d}
	d_N(\blambda) = \min_{\btheta \in \calH}\ \calL_N\left( f(\btheta,\cdot),\blambda \right)
\end{equation}
is concave by definition, since it is the pointwise minimum of a set of affine functions~\cite{Boyd04c}. After evaluating~\eqref{E:empirical_d}, the maximization in~\eqref{P:empirical_dual} can therefore be solved efficiently. Thus, when the parameterization~$f$ leads to an empirical Lagrangian that is convex in~$\btheta$, \eqref{E:empirical_d} can be solved exactly and efficiently. Even in the non-convex case, obtaining a good solution of~\eqref{E:empirical_d} is still be possible. This is the case, for instance, of NNs, since they can typically be trained so as to make the empirical Lagrangian vanish\cite{zhang2016understanding, arpit2017closer, soltanolkotabi2018theoretical, ge2017learning, brutzkus2017globally}.

Fundamental to our guarantees is the fact the empirical Lagrangian is uniformly close to its statistical version~\eqref{E:lagrangian}. This fact is expressed in the following theorem that bounds the gap between the optimal value of~\eqref{P:empirical_dual} and~\eqref{P:dual_param}:

\begin{proposition}\label{T:empirical}

Let~$d_{VC}$ denote the VC dimension of the hypothesis class~$\calP$. Then, under assumption~\ref{A:losses}, it holds with probability~$1-\delta$ that
\begin{equation}\label{E:empiricalGap}
	\abs{D^\star_{\epsilon} - D^\star_{\epsilon,N}} \leq V_N \triangleq 2 B
		\sqrt{\frac{1}{N} \left[ 1 + \log\left( \frac{4 (2N)^{d_{VC}}}{\delta} \right) \right]}
		\text{.}
\end{equation}
\end{proposition}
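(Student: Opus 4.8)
The plan is to show that the two max-min values $D^\star_\epsilon$ and $D^\star_{\epsilon,N}$ cannot differ by more than the uniform deviation between the statistical Lagrangian $\calL(f(\btheta,\cdot),\blambda)$ and its empirical counterpart $\calL_N(f(\btheta,\cdot),\blambda)$, and then to control that uniform deviation via classical VC generalization bounds. The first step is a purely analytic lemma: if two functions $g_1,g_2$ of a joint argument satisfy $|g_1(\btheta,\blambda) - g_2(\btheta,\blambda)| \leq \eta$ pointwise, then $|\max_{\blambda}\min_{\btheta} g_1 - \max_{\blambda}\min_{\btheta} g_2| \leq \eta$. This is standard (the $\min$ and $\max$ operators are nonexpansive in the sup-norm), so the crux reduces to finding the right $\eta$ such that $\sup_{\btheta\in\calH,\blambda\in\setR^m_+} |\calL(f(\btheta,\cdot),\blambda) - \calL_N(f(\btheta,\cdot),\blambda)| \leq \eta$ with probability $1-\delta$.

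The difficulty here is that the dual variables $\blambda$ are unbounded, so the naive Lagrangian difference $\sum_i \lambda_i(\E[\ell_i] - \tfrac1N\sum_n \ell_i(\cdot))$ cannot be uniformly bounded over all of $\setR^m_+$. I would sidestep this the same way it is handled in the body of the paper: it suffices to control the deviation of each dual function $d(\blambda) = \min_\btheta \calL(f(\btheta,\cdot),\blambda)$ and $d_N(\blambda) = \min_\btheta \calL_N(f(\btheta,\cdot),\blambda)$ at the maximizers. Concretely, let $\blambda^\star$ attain $D^\star_\epsilon$ and $\blambda^\star_N$ attain $D^\star_{\epsilon,N}$; then $D^\star_\epsilon - D^\star_{\epsilon,N} \leq d(\blambda^\star) - d_N(\blambda^\star)$ and the reverse inequality uses $\blambda^\star_N$. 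Each of these one-sided gaps is again a $\min_\btheta$ of a difference, so what I really need is a uniform-over-$\btheta$ bound on $|\tfrac1N\sum_n \ell_j(f(\btheta,\bx_n),y_n) - \E[\ell_j(f(\btheta,\bx),y)]|$ for $j = 0,1,\dots,m$, at a \emph{fixed} $\blambda$ — and since $\ell_0$ alone already controls the objective term while the constraint slacks $\E[\ell_i]-c_i$ are handled by the feasibility structure, one can argue (as is implicit in the surrounding text) that only the $\ell_0$ deviation term survives with its coefficient $1$. That is why the final bound carries $2B$ rather than something scaling with $\|\blambda\|_1$.

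The main technical ingredient is then the standard VC bound: for a hypothesis class $\calP = \{f(\btheta,\cdot) : \btheta \in \calH\}$ with VC dimension $d_{VC}$, composed with the fixed $B$-bounded Lipschitz loss $\ell_0$ (which does not increase the VC dimension of the induced class of $\{-B,B\}$-valued loss functions, by composition), Vapnik's uniform convergence result gives, with probability $1-\delta$,
\begin{equation}\label{E:vc_step}
	\sup_{\btheta \in \calH} \abs{ \frac{1}{N}\sum_{n=1}^N \ell_0(f(\btheta,\bx_n),y_n) - \E\left[\ell_0(f(\btheta,\bx),y)\right] } \leq 2B \sqrt{\frac{1}{N}\left[1 + \log\left(\frac{4(2N)^{d_{VC}}}{\delta}\right)\right]} \eqqcolon V_N \text{.}
\end{equation}
The factor $2B$ comes from rescaling the $[-B,B]$-valued losses to $[0,1]$ and from the symmetrization constants in the classical statement; the $\log\!\big(4(2N)^{d_{VC}}/\delta\big)$ term is the logarithm of the growth function evaluated at $2N$ points, inflated by the usual union-bound and symmetrization factor of $4$. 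Chaining \eqref{E:vc_step} through the nonexpansiveness of $\max_\blambda\min_\btheta$ yields $\abs{D^\star_\epsilon - D^\star_{\epsilon,N}} \leq V_N$, which is exactly \eqref{E:empiricalGap}.

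The step I expect to be the real obstacle — or at least the one requiring care in the write-up — is the unbounded-dual issue: making rigorous why the coefficient in front of $V_N$ does not blow up with $\|\blambda^\star\|_1$ or $\|\blambda^\star_N\|_1$. The clean way is to note that at the maximizing $\blambda$, the relevant comparison $d(\blambda^\star) - d_N(\blambda^\star)$ can be upper bounded by plugging into $\calL$ the $\btheta$ that minimizes $\calL_N(\cdot,\blambda^\star)$ (and symmetrically), so the constraint-slack terms $\lambda_i^\star(\E[\ell_i] - \tfrac1N\sum \ell_i)$ appear with both signs and, after taking the worst case, one still needs a uniform bound that includes them; the argument then either (i) additionally invokes \eqref{E:vc_step}-type bounds for each $\ell_i$ and absorbs the $\|\blambda^\star\|_1$ factor, in which case the stated bound should really be read together with the boundedness of the optimal dual variables guaranteed by Assumption~\ref{A:slater}, or (ii) more simply, bounds the Lagrangian deviation only through $\ell_0$ by exploiting that the empirical and statistical problems share the same tightened-feasibility geometry. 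I would present route (i) for full rigor and remark that the $(1+\|\blambda_p^\star\|_1)$ factor, already present in Proposition~\ref{T:param}, is what ultimately carries over to the combined bound in Theorem~\ref{T:main}; here in Proposition~\ref{T:empirical} the statement is kept clean by folding everything into the single constant $2B$ and noting the VC term dominates.
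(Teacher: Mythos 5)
You correctly isolate the crux---the dual variables range over all of $\setR^m_+$, so a uniform bound on $\abs{\calL-\calL_N}$ is unavailable and a naive argument picks up a factor of $(1+\norm{\blambda}_1)$---but you do not resolve it, and neither of the two routes you sketch does. Route (i) (a VC bound for each $\ell_i$, evaluated at the maximizing multipliers) proves only $\abs{D^\star_\epsilon - D^\star_{\epsilon,N}} \leq \left(1+\max\left\{\norm{\blambda^\star_\epsilon}_1,\norm{\blambda^\star_{\epsilon,N}}_1\right\}\right)V_N'$, with $V_N'$ further degraded by a union bound over the $m+1$ losses; that is a strictly weaker statement than \eqref{E:empiricalGap}, and ``folding everything into the single constant $2B$'' is not legitimate because $\norm{\blambda^\star_{\epsilon,N}}_1$ is a random quantity that Assumption~\ref{A:losses} does not control. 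Route (ii) is a description of a hoped-for conclusion, not an argument.

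The missing idea is complementary slackness. At the saddle points of \eqref{P:dual_param} and \eqref{P:empirical_dual} the KKT conditions give $[\blambda^\star_\epsilon]_i\left(\E\left[\ell_i(f(\btheta^\star_\epsilon,\bx),y)\right]-c_i\right)=0$ and its empirical analogue, so every $\lambda_i$-weighted term vanishes at the optimum and the two dual values collapse to $D^\star_\epsilon = \E\left[\ell_0(f(\btheta^\star_\epsilon,\bx),y)\right] \triangleq F_0(\btheta^\star_\epsilon)$ and $D^\star_{\epsilon,N} = \frac{1}{N}\sum_{n}\ell_0(f(\btheta^\star_{\epsilon,N},\bx_n),y_n) \triangleq \hat{F}_0(\btheta^\star_{\epsilon,N})$. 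The unbounded multipliers thereby disappear from the comparison entirely. The remaining step is the sandwich $F_0(\btheta^\star_\epsilon)-\hat{F}_0(\btheta^\star_\epsilon) \leq F_0(\btheta^\star_\epsilon)-\hat{F}_0(\btheta^\star_{\epsilon,N}) \leq F_0(\btheta^\star_{\epsilon,N})-\hat{F}_0(\btheta^\star_{\epsilon,N})$, which bounds $\abs{D^\star_\epsilon - D^\star_{\epsilon,N}}$ by the larger of the two deviations $\abs{F_0-\hat{F}_0}$ at the two optimizers; a single uniform VC bound on $\sup_{\btheta}\abs{F_0(\btheta)-\hat{F}_0(\btheta)}$---exactly the deviation bound for $\ell_0$ that you state---then finishes the proof with the constant $2B$ and no $\norm{\blambda}_1$ inflation. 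Your VC ingredient is the right one; what is absent is the reduction that makes it applicable to $\ell_0$ alone.
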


\begin{proof}
See appendix~\ref{X:empirical}.
\end{proof}

Before proceeding, it is worth noting that~\eqref{P:empirical_dual} is the dual problem associated with
\begin{prob}\label{P:empirical_constrained}
	\minimize_{\btheta \in \calH}& &&\frac{1}{N}\sum_{n=1}^N\ell(f(\boldsymbol{\theta},\bx_n))
	\\
	\subjectto& &&\frac{1}{N}\sum_{n=1}^N \ell_i\left( f(\btheta,\bx_n),y_n \right)
		\leq c_i \text{, } i = 1,\ldots,m.
\end{prob}
Yet, the relation between the parameterized, empirical problem~\eqref{P:empirical_constrained} and the statistical~\eqref{P:csl} stops at their form. Indeed, there is no guarantee that~\eqref{P:empirical_constrained} approximates~\eqref{P:csl}, since it is in general a non-convex optimization problem for which strong duality need not hold. The lack of strong duality also implies that~\eqref{P:empirical_constrained} cannot be solved through its dual~\cite{Boyd04c} and since projecting onto its non-convex constraints may not be straightforward, solving it directly could be quite challenging. Thankfully, we need not concern ourselves with~\eqref{P:empirical_constrained} since its dual problem~\eqref{P:empirical_dual} is close to~\eqref{P:csl}, as we show in the next section.

\section{The empirical duality gap}
\label{S:empirical_duality}

Combining Propositions~\ref{T:param} and~\ref{T:empirical}, we obtain the main result of this paper by using the triangle inequality:

\begin{theorem}\label{T:main}
For~$\epsilon > 0$, let~$f$ be an~$\epsilon$-parameterization of~$\calF$ as per Definition~\ref{D:epsilon} and consider the finite dimensional empirical dual~\eqref{P:empirical_dual} of~\eqref{P:csl}. Then, under assumptions~\ref{A:losses} and~\ref{A:slater}, it holds with probability~$1-\delta$ that
\begin{equation}\label{E:empirical_duality_gap}
	\abs{D^\star_{\epsilon,N} - P^\star} \leq
		\left(1 + \norm{\blambda_\epsilon^\star}_1 \right) L \epsilon + V_N,
\end{equation}
where~$P^\star$ and~$D^\star_{\epsilon,N}$ are the values of~\eqref{P:csl} and~\eqref{P:empirical_dual} respectively, $\blambda_p^\star$ are the dual variables that achieve~$D^\star_{\epsilon}$ in~\eqref{P:dual_param}, and~$V_N = \calO(\sqrt{\log(N)/N})$.
\end{theorem}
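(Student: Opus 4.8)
The plan is to combine the two previously-established bounds via the triangle inequality. Specifically, I would write
\[
	\abs{D^\star_{\epsilon,N} - P^\star}
	\leq \abs{D^\star_{\epsilon,N} - D^\star_{\epsilon}} + \abs{D^\star_{\epsilon} - P^\star},
\]
and then bound each term separately using the results already in hand.

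For the second term, Proposition~\ref{T:param} gives, under Assumptions~\ref{A:losses} and~\ref{A:slater}, the two-sided bound $P^\star \leq D^\star_\epsilon \leq P^\star + (1 + \norm{\blambda_p^\star}_1) L\epsilon$, where $\blambda_p^\star$ are the dual variables of~\eqref{P:csl} with constraints tightened to $c_i - L\epsilon$. Since $D^\star_\epsilon \geq P^\star$, the absolute value simply equals $D^\star_\epsilon - P^\star$, which is at most $(1 + \norm{\blambda_p^\star}_1) L\epsilon$. For the first term, Proposition~\ref{T:empirical} gives, under Assumption~\ref{A:losses}, that with probability $1-\delta$ we have $\abs{D^\star_{\epsilon} - D^\star_{\epsilon,N}} \leq V_N$ with $V_N$ as defined in~\eqref{E:empiricalGap}. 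Adding the two bounds yields~\eqref{E:empirical_duality_gap}. The $\calO(\sqrt{\log(N)/N})$ rate for $V_N$ follows by inspecting its closed form: inside the square root, the dominant term is $\frac{1}{N}\log\!\big((2N)^{d_{VC}}\big) = \frac{d_{VC}}{N}\log(2N)$, so $V_N = \calO(\sqrt{\log(N)/N})$ as claimed (treating $d_{VC}$, $B$, $\delta$ as constants).

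There is essentially no obstacle here — the theorem is a direct corollary once Propositions~\ref{T:param} and~\ref{T:empirical} are in place; the only point requiring a word of care is that both propositions' hypotheses must hold simultaneously. Proposition~\ref{T:empirical} needs only Assumption~\ref{A:losses}, while Proposition~\ref{T:param} needs both~\ref{A:losses} and~\ref{A:slater}, so invoking both under~\ref{A:losses} and~\ref{A:slater} is legitimate, and the probability-$1-\delta$ event is exactly the one from Proposition~\ref{T:empirical} (the parameterization bound is deterministic). One should also note that the statement of the theorem writes $\blambda_\epsilon^\star$ in~\eqref{E:empirical_duality_gap} and $\blambda_p^\star$ in the surrounding text; these refer to the same object — the optimal dual variables of the tightened version of~\eqref{P:csl} — so I would keep the notation consistent with Proposition~\ref{T:param}.

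\begin{proof}
By the triangle inequality,
\[
	\abs{D^\star_{\epsilon,N} - P^\star}
	\leq \abs{D^\star_{\epsilon,N} - D^\star_{\epsilon}}
	+ \abs{D^\star_{\epsilon} - P^\star}.
\]
Under Assumptions~\ref{A:losses} and~\ref{A:slater}, Proposition~\ref{T:param} gives $P^\star \leq D^\star_\epsilon$, so that $\abs{D^\star_{\epsilon} - P^\star} = D^\star_\epsilon - P^\star \leq (1 + \norm{\blambda_p^\star}_1) L\epsilon$, with $\blambda_p^\star$ the dual variables achieving $D^\star_\epsilon$ in~\eqref{P:dual_param}. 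Under Assumption~\ref{A:losses}, Proposition~\ref{T:empirical} gives $\abs{D^\star_{\epsilon} - D^\star_{\epsilon,N}} \leq V_N$ with probability $1-\delta$. Combining,
\[
	\abs{D^\star_{\epsilon,N} - P^\star}
	\leq \left(1 + \norm{\blambda_p^\star}_1\right) L\epsilon + V_N
\]
with probability $1-\delta$. Finally, from~\eqref{E:empiricalGap}, the leading term inside the square root defining $V_N$ is $\frac{1}{N}\log\!\big((2N)^{d_{VC}}\big) = \frac{d_{VC}}{N}\log(2N)$, whence $V_N = \calO(\sqrt{\log(N)/N})$.
\end{proof}
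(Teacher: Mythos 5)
Your proposal is correct and matches the paper's argument exactly: the paper proves Theorem~\ref{T:main} precisely by combining Proposition~\ref{T:param} and Proposition~\ref{T:empirical} via the triangle inequality, with the rate for $V_N$ read off from~\eqref{E:empiricalGap}. Your remark that $\blambda_\epsilon^\star$ and $\blambda_p^\star$ denote the same dual variables (a notational slip in the paper) is also apt.
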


Hence, the finite dimensional empirical dual problem~\eqref{P:empirical_dual} can be used to obtain near-optimal solutions to the constrained statistical learning problem~\eqref{P:csl}. The quality of this approximation depends on the richness of the parameterization through~$\epsilon$ and the difficulty of it satisfying the constraints through~$\norm{\blambda_\epsilon^\star}_1$. The complexity of the parametrization also increases~$V_N$ through the VC dimension (see \eqref{E:empiricalGap}). Thus, there exist a synergy between the number of samples~$N$ and the parameterization. Indeed, for large sample sizes, the first term in~\eqref{E:empirical_duality_gap} dominates, motivating the use of richer representations. However, as the number of parameters of the representation grows, so does~$V_N$~\cite{mlbook}. In the low sample regimes, it may therefore be beneficial to use simpler parameterizations.

\begin{figure}[t]
	\centering
	\begin{tikzpicture}
	\pgfplotsset{grid style={dashed,gray!50}}
	\begin{axis}[
		tick scale binop=\times,
		%title style={yshift=-1ex},
		%xlabel near ticks,
		%ylabel near ticks,
		minor tick num=1,
		xlabel={Epoch},
		ylabel={Test probabilities},
		ylabel near ticks,
		width=\columnwidth, 
		height=0.67\columnwidth, 
		xmin=0,xmax=300,ymin=0.80,ymax=1,
		grid=both,
		legend style={
			at={(0.98,0.26)}, 
			anchor=south east, 
			legend cell align=left
			}		
	]
	\addplot [color={rgb:red,55;green,126;blue,184},solid,thick,mark=none]
		table[x expr=\coordindex+1,y expr=\thisrow{ACC_log}, col sep=comma]{exp1.csv};
		\addlegendentry{Accuracy (Unconstrained)};

	\addplot [color={rgb:red,228;green,26;blue,28},solid,thick,mark=none]
		table[x expr=\coordindex+1,y expr=\thisrow{ACC_log}, col sep=comma]{exp3.csv};
		\addlegendentry{Accuracy (Constrained)};

	\addplot [color={rgb:red,55;green,126;blue,184},dashed,thick,mark=none]
		table[x expr=\coordindex+1,y expr=\thisrow{FAIR_log}, col sep=comma]{exp1.csv};
		\addlegendentry{Fairness (Unconstrained)};
		
	\addplot [color={rgb:red,228;green,26;blue,28},dashed,thick,mark=none]
		table[x expr=\coordindex+1,y expr=\thisrow{FAIR_log}, col sep=comma]{exp3.csv};
		\addlegendentry{Fairness (Constrained)};

\end{axis}
\end{tikzpicture}
	\vspace*{-8pt}
	\caption{Accuracy and fairness.}
	\label{F:performance}
\end{figure}
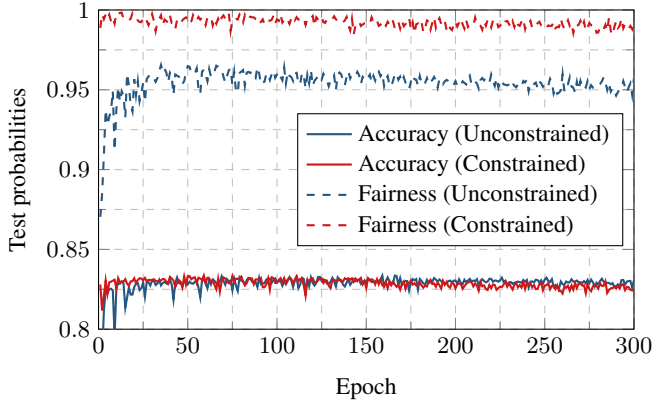

\section{Numerical Example}

In this section, we illustrate the use of the constrained statistical problem~\eqref{P:csl} and the empirical dual~\eqref{P:empirical_dual} in the context of fair learning~\cite{donini2018empirical,Bilal}. One of the ways that fairness is encoded in learning problems is by imposing that the final classifier be insensitive to a protected variable~$z$. If we consider that~$\phi(\bx,z)$ denotes the probability of the feature vector~$\bx$ with protected variable~$z$ belonging to the class~$y = 1$, then we would like~$\phi(\bx,z)$ to be similar regardless of the value of~$z$. In our numerical examples based on the Adult dataset~\cite{uci}, the goal is to predict whether an individual makes more than~US\$~$50.000$~(encoded as~$y = 1$) based on data such as age, weekly hours, marital status, and education. The protected variable in this case is gender. This problem can be stated as the constrained statistical problem
\begin{prob}\label{P:fair}
	\minimize_{\phi\in\calF}&
		&&-\E\left[y \log(\phi(\bx,z))\right]
	\\
	\subjectto& &&\E\left[D_\text{KL}(\phi(\bx,\text{Male}) \Vert \phi(\bx,\text{Female}))\right]
		\leq c
		\text{,}
\end{prob}
where~$D_\text{KL}(p \Vert q)$ denotes the Kullback-Liebler divergence between Bernoulli distributions with parameters~$p$ and~$q$ and~$c$ is the desired level of fairness. As a parameterization, we use a feedforward NN with~$256$ hidden nodes and sigmoidal activation function.

As a benchmark, we begin by training a classifier by ignoring the constraint in~\eqref{P:fair}, i.e., by using the unconstrained~\eqref{P:sl}. Results for accuracy~[defined as $\Pr(\phi(\bx_n,z)=y_i$] and fairness~[defined as $\Pr(\phi(\bx_n,\text{Male})=\phi(\bx_n,\text{Female}))$] on the test set are shown in the dashed lines of Figure~\ref{F:performance}. The test accuracy of the classifier is around~$83\%$ and it was insensitive to gender in approximately~$96\%$ of the test feature vectors. Both quantities ares estimated using empirical averages over the test set. Though this may be sufficiently high in many applications, impacting~$4\%$ of any population is often unacceptable. We therefore turn to~\eqref{P:fair} with~$c = 10^{-3}$.

Figure~\ref{F:lambda} shows the evolution of the dual variable~$\lambda$ over the training epochs~$k$. Notice that the dual variable converges to a value close to~$0.65$, which means that further tightening of relaxing the constraint would have little impact on the accuracy~(as measure by the cross-entropy loss~$\ell_0$). The resulting classifier is feasible for the constrained statistical problem as evidenced by Figure~\ref{F:slacks}, which shows the evolution of the slack~$\E\left[D_\text{KL}(f(\btheta_k,\bx,\text{Male}) \Vert f(\btheta_k,\bx,\text{Female}))\right] - c$. As shown in the solid lines of Figure~\ref{F:performance}, the estimator obtained using the empirical dual problem of~\eqref{P:fair} is insensitive to gender in~$99.7\%$ of the test samples at the cost of a loss in accuracy of less than~$0.1\%$.

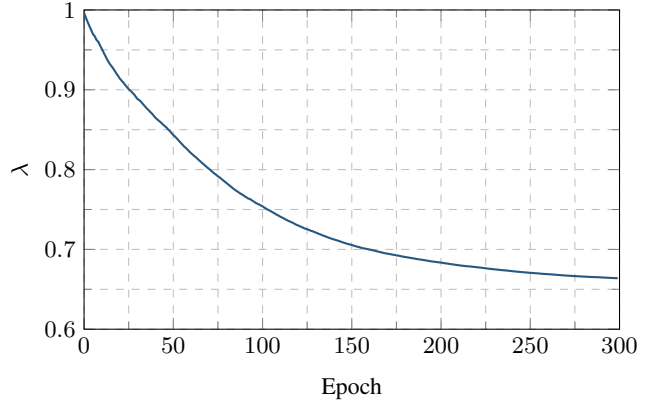
\begin{figure}[t]
	\centering
	\begin{tikzpicture}
	\pgfplotsset{grid style={dashed,gray!50}}
	\begin{axis}[
		tick scale binop=\times,
		%title style={yshift=-1ex},
		%xlabel near ticks,
		%ylabel near ticks,
		minor tick num=1,
		xlabel={Epoch},
		ylabel={$\lambda$},
		ylabel near ticks,
		width=\columnwidth, 
		height=0.67\columnwidth, 
		xmin=0,xmax=300,ymin=0.6,ymax=1,
		grid=both,
	]
	
	\addplot [color={rgb:red,55;green,126;blue,184},solid,thick,mark=none]
		table[x expr=\coordindex,y expr=\thisrow{lambdas_log}, col sep=comma]{exp3.csv};
	
\end{axis}
\end{tikzpicture}
	\vspace*{-8pt}
	\caption{Dual variable relative to the fairness constraint.}
	\label{F:lambda}
\end{figure}

\begin{figure}[t]
	\centering
	\begin{tikzpicture}
	\pgfplotsset{grid style={dashed,gray!50}}
	\begin{axis}[
		tick scale binop=\times,
		%title style={yshift=-1ex},
		%xlabel near ticks,
		%ylabel near ticks,
		minor tick num=1,
		xlabel={Epoch},
		ylabel={Constraint slack},
		ylabel near ticks,
		width=\columnwidth, 
		height=0.67\columnwidth, 
		xmin=0,xmax=300,ymin=-0.001,ymax=0,
		grid=both,
	]
	
	\addplot [color={rgb:red,55;green,126;blue,184},solid,thick,mark=none]
		table[x expr=\coordindex,y expr=\thisrow{PFEAS_log}, col sep=comma]{exp3.csv};
	
\end{axis}
\end{tikzpicture}
	\vspace*{-8pt}
	\caption{Fairness constraint slack.}
	\label{F:slacks}
\end{figure}
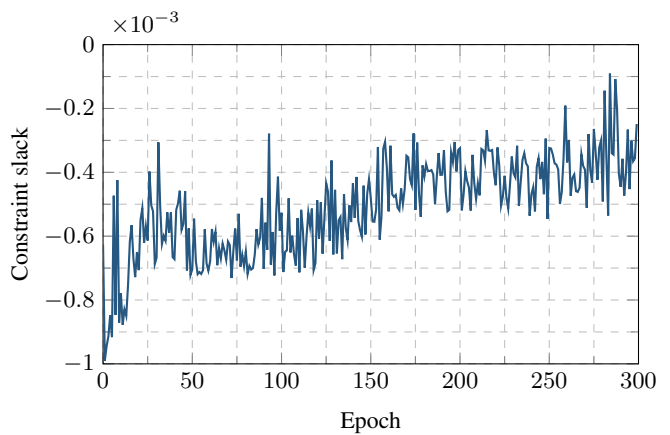

\section{Conclusion}

In this work, we have studied a constrained version of the celebrated statistical learning problem. Contrary to previous approaches based on regularization or heuristic structural properties of the parameterization, we directly tackle the constrained statistical problem. To do so, we proposed to overcome the infinite dimensionality, unknown distributions, and constrained nature of these problems by leveraging finite dimensional parameterizations, sample averages, and duality theory. Throughout this paper, we have shown that (i)~there is a bound on the optimality loss due to using parameterizations and that it can be made small by leveraging richer, higher-dimensional ones and (ii)~the gap due to using empirical average rather than statistical expectations decreases as the number of samples grows. Together, these results yield a bound on the empirical duality gap of constrained statistical learning problems, i.e., on the gap between the original constrained statistical problem~(the one we want to solve) and an unconstrained empirical problem~(the one we can solve). Finally, we have illustrated in a numerical example the practicality of our approach.

%%%%% START OF APPENDIX CONDITIONAL %%%%%
\newpage

\appendix

\section{Proof of Proposition~1}
\label{X:param}

\begin{proof}
The proof follows by relating the parameterized dual problem~\eqref{P:dual_param} to a perturbed~(tightened) version of the original~\eqref{P:csl}. We can then leverage the strong duality of these optimization problems to obtain the bounds in~\eqref{E:bound_param}.

Formally, start by defining the dual problem of~\eqref{P:csl} as
\begin{prob}[$\text{DI}$]\label{P:dual_csl}
	D^\star = \max_{\blambda \in \setR^m_+}\ \min_{\phi \in \calF}\ %
		\calL\left( \phi,\blambda \right),
\end{prob}
using the Lagrangian~$\calL$ from~\eqref{E:lagrangian}. Then, notice that assumption~\ref{A:slater} implies that strong duality holds for~\eqref{P:csl}, i.e., the saddle-point relation
\begin{multline}\label{E:saddle_csl}
	\calL( \phi^\star, \blambda^\prime ) \leq
	\max_{\blambda \in \setR^m_+}\ \min_{\phi \in \calF}\ %
		\calL\left( \phi,\blambda \right) = D^\star = P^\star
	\\	
	{}= \min_{\phi \in \calF}\ \max_{\blambda \in \setR^m_+}\ %
		\calL\left( \phi,\blambda \right)
	\leq \calL( \phi^\prime, \blambda^\star )
\end{multline}
holds for all~$\phi^\prime \in \calF$ and~$\blambda^\prime \in \setR^m_+$, where~$\phi^\star$ is a solution of~\eqref{P:csl} and~$\blambda^\star$ is a solution of~\eqref{P:dual_csl}. Additionally, we have from~\eqref{P:dual_param} that
\begin{equation}
	D_\epsilon^\star \geq \min_{\btheta \in \calH} \calL\left( f(\btheta, \cdot), \blambda \right)
		\text{,} \quad \text{for all } \blambda \in \setR^m_+
		\text{.}
\end{equation}
Immediately, we obtain the lower bound in~\eqref{E:bound_param}. Explicitly,
\begin{equation}
	D_\epsilon^\star
		\geq \min_{\btheta \in \calH} \calL\left( f(\btheta, \cdot), \blambda^\star \right)
		\geq \min_{\phi \in \calF} \calL\left( \phi, \blambda^\star \right) = P^\star
		\text{,}
\end{equation}
where the second inequality comes from~$\{f(\btheta,\cdot) \mid \btheta \in \calH\} \subseteq \calF$.

To derive the upper bound, first add and subtract~$\min_{\phi \in \calF} \calL(\phi,\blambda)$ from~\eqref{P:dual_param} to get
\begin{align*}
	D^\star_\epsilon &= \max_{\blambda \in \setR^m_+}\ %
		\min_{\substack{\btheta \in \calH, \\ \phi \in \calF}}\ %
		\calL\left( \phi, \blambda \right) +
		\E \left[\ell_0(f(\btheta,\bx),y) - \ell_0(\phi(\bx),y)\right]
	\\
	{}&+ \sum_{i = 1}^m \lambda_i
			\E \left[\ell_i(f(\btheta,\bx),y) - \ell_i(\phi(\bx),y)\right]
		\text{.}
\end{align*}
Notice that the last two terms can be written as the inner product~$\sum_{i = 0}^m \lambda_i \E \left[\ell_i(f(\btheta,\bx),y) - \ell_i(\phi(\bx),y)\right]$ by letting~$\lambda_0 = 1$. Hence, using H\"{o}lder's inequality yields
\begin{equation}\label{E:bound_holder}
\begin{aligned}
	D^\star_\epsilon &\leq \max_{\blambda \in \setR^m_+}\ %
	\min_{\substack{\btheta \in \calH, \\ \phi \in \calF}}\ %
		\calL\left( \phi, \blambda \right)
	\\
	{}&+ (1+\norm{\blambda}_1)
	\max_{i = 0,\dots,m}
		\abs{ \E \left[\ell_i(f(\btheta,\bx),y) - \ell_i(\phi(\bx),y)\right] }
		\text{.}
\end{aligned}
\end{equation}
The second term in~\eqref{E:bound_holder} can be bounded uniformly using the fact that~$f$ is an~$\epsilon$-parameterization. Indeed, using the convexity of the max-norm yields
\begin{multline*}
	\max_{i = 0,\dots,m}
		\abs{ \E \left[\ell_i(f(\btheta,\bx),y) - \ell_i(\phi(\bx),y)\right] }
	\leq{}
	\\
	\E \left[\max_{i = 0,\dots,m} \abs{\ell_i(f(\btheta,\bx),y) - \ell_i(\phi(\bx),y)}\right]
		\text{,}
\end{multline*}
which from assumption~\ref{A:losses} implies that 
\begin{equation}\label{E:bound_lipschitz}
	\max_{i = 0,\dots,m}
		\abs{ \E \left[\ell_i(f(\btheta,\bx),y) - \ell_i(\phi(\bx),y)\right] }
	\leq L \E \left[\abs{f(\btheta,\bx) - \phi(\bx)}\right]
		\text{.}
\end{equation}
Since~$f$ is an $\epsilon$-parameterization of~$\calF$, minimizing~\eqref{E:bound_lipschitz} over~$\btheta$ yields
\begin{multline}\label{E:bound_epsilon}
	\min_{\btheta \in \calH}\ \max_{i = 0,\dots,m}
		\abs{ \E \left[\ell_i(f(\btheta,\bx),y) - \ell_i(\phi(\bx),y)\right] }
	\\
	{}\leq \min_{\btheta \in \calH} L \E \left[\abs{f(\btheta,\bx) - \phi(\bx)}\right]
	\leq L \epsilon
		\text{.}
\end{multline}
Substituting~\eqref{E:bound_epsilon} back into~\eqref{E:bound_holder}, we write
\begin{equation}\label{E:bound_pertubed}
	D^\star_\epsilon \leq \max_{\blambda \in \setR^m_+}\ %
	\min_{\phi \in \calF}\ %
		\calL\left( \phi, \blambda \right) + (1+\norm{\blambda}_1) L \epsilon \triangleq D^\star_p
		\text{.}
\end{equation}

The next step is to realize that the right-hand side of~\eqref{E:bound_pertubed}, namely~$D^\star_p$, is the dual problem of a perturbed version of~\eqref{P:csl}. We thus obtain another saddle-point relation similar to~\eqref{E:saddle_csl} that we can exploit to bound~$D^\star_p$, and therefore~$D^\star_\epsilon$, in terms of~$P^\star$. Formally, using that~$\lambda_i \geq 0$, %, it holds that~$\norm{\blambda}_1 = \sum_i \lambda_i$ and~
$D^\star_p$ in~\eqref{E:bound_pertubed} can be written as
\begin{equation}\label{E:dual_perturbed}
\begin{aligned}
	D^\star_p &= \max_{\blambda \in \setR^m_+}\ \min_{\phi \in \calF}\ %
		\E \left[\ell_0(\phi(\bx),y)\right] + L \epsilon
	\\
	{}&+ \sum_{i = 1}^m \lambda_i \left(\E \left[\ell_i(\phi(\bx),y)\right] - c_i + L \epsilon \right)
		\text{,}
\end{aligned}
\end{equation}
where we recognize the dual problem of
\begin{prob}\label{P:csl_perturbed}
	&&P_p^\star \triangleq \min_{\phi\in\calF}&
		&&\E \left[\ell_0(\phi(\bx),y)\right] + L\epsilon
	\\
	&&\subjectto& &&\E \left[\ell_i(\phi(\bx),y)\right]
		\leq c_i - L\epsilon \text{,}\ \ i = 1,\ldots,m
		\text{.}
\end{prob}
Notice that~\eqref{P:csl_perturbed} is a convex optimization problem just as~\eqref{P:csl} and that due to assumption~\ref{A:slater}, it is also strongly convex. Immediately, we obtain the saddle-point relation
\begin{multline}\label{E:saddle_perturbed}
	\calL( \phi_p^\star, \blambda^\prime ) + (1+\norm{\blambda^\prime}_1) L \epsilon \leq
	\max_{\blambda \in \setR^m_+}\ \min_{\phi \in \calF}\ %
		\calL\left( \phi,\blambda \right) + (1+\norm{\blambda}_1) L \epsilon
	\\
	{}= D^\star_p = P^\star_p ={}
	\\	
	\min_{\phi \in \calF}\ \max_{\blambda \in \setR^m_+}\ %
		\calL\left( \phi,\blambda \right) + (1+\norm{\blambda}_1) L \epsilon
	\leq \calL( \phi^\prime, \blambda_p^\star ) + (1+\norm{\blambda_p^\star}_1) L \epsilon
		\text{,}
\end{multline}
which holds for all~$\phi^\prime \in \calF$ and~$\blambda^\prime \in \setR^m_+$, where~$\phi_p^\star$ is the solution of~\eqref{P:csl_perturbed} and~$\blambda_p^\star$ is the dual variable that achieves~$D^\star_p$ in~\eqref{E:dual_perturbed}.

Going back to~\eqref{E:bound_pertubed} we can now conclude the proof. First, use~\eqref{E:saddle_perturbed} to obtain
\begin{equation}
	D^\star_\epsilon \leq D^\star_p \leq \calL\left( \phi^\star, \blambda_p^\star \right)
		+ (1+\norm{\blambda_p^\star}_1) L \epsilon
		\text{,}
\end{equation}
where we used~$\phi^\prime = \phi^\star$, the solution of~\eqref{P:csl}. Now, using~\eqref{E:saddle_csl} on the Lagrangian term gives
\begin{equation}
	D^\star_\epsilon \leq \calL\left( \phi^\star, \blambda^\star \right)
		+ (1+\norm{\blambda_p^\star}_1) L \epsilon
	= P^\star + (1+\norm{\blambda_p^\star}_1) L \epsilon
		\text{,}
\end{equation}
which concludes the proof.
\end{proof}

\section{Proof of Corollary~1}
\label{X:param_feas}

\begin{proof}
Suppose that it is not the case. Then, there exists at least one~$i$ such that~$\E\left[ \ell_i(f(\btheta^\star,\bx),y) \right] - c_i > 0$. Since~$\blambda$ is unbounded above, we obtain that~$D^\star_\epsilon \to +\infty$. However, assumptions~\ref{A:losses} and~\ref{A:slater} imply that~$D^\star_\epsilon < +\infty$. Indeed, consider the dual function
\begin{equation}\label{E:param_d}
\begin{aligned}
	d(\blambda) &= \min_{\btheta \in \calH} \calL(f(\btheta,\cdot),\blambda)
	\\
	{}&= \min_{\btheta \in \calH} \E \left[\ell_0(f(\btheta,\bx),y)\right]
		+ \sum_{i = 1}^m \lambda_i \left(\E \left[\ell_i(f(\btheta,\bx),y)\right] - c_i \right)
		\text{.}
\end{aligned}
\end{equation}
Using the fact that~$\ell_0$ is $B$-bounded and the strictly feasible point~$\btheta^\dagger$ from assumption~\ref{A:slater}, $d(\blambda)$ is upper bounded by
\begin{align*}
	d(\blambda) &\leq \E \left[\ell_0(f(\btheta^\dagger,\bx),y)\right]
		+ \sum_{i = 1}^m \lambda_i
			\left(\E \left[\ell_i(f(\btheta^\dagger,\bx),y)\right] - c_i \right)
	\\
	{}&< B - (L\epsilon ) \sum_{i = 1}^m {\blambda}_i =  B - (L\epsilon ) \left\|{\blambda}\right\|_1 < \infty
		\text{,}
\end{align*}
where we once again used the fact that~$\lambda_i \geq 0$ to write~$\sum_i \lambda = \norm{\blambda}_1$. Hence, it must be that~$f(\btheta^\star,\cdot)$ is~\eqref{P:csl}-feasible.
\end{proof}

\section{Proof of Proposition~2}
\label{X:empirical}

\begin{proof}
Let~$(\btheta_{\epsilon}^\star, \blambda_\epsilon^\star)$ and~$(\btheta_{\epsilon,N}^\star, \blambda_{\epsilon,N}^\star)$ be variables that achieve~$D_{\epsilon}^\star$ in~\eqref{P:dual_param} and~$D_{\epsilon,N}^\star$ in~\eqref{P:empirical_dual} respectively. As such, both pair of variables satisfy the KKT conditions~\cite[Section 5.5.3]{Boyd04c}. In particular, it holds that
\begin{subequations}\label{E:comp_slack}
\begin{align}
	[\blambda_{\epsilon}^\star]_i \bigg(\E \left[\ell_i(f(\btheta_{\epsilon}^\star,\bx),y)\right]
		- c_i \bigg) &= 0
		\text{,}
		\label{E:comp_slack_param}
	\\
	[\blambda_{\epsilon,N}^\star]_i \left(\frac{1}{N} \sum_{n = 1}^N
		\ell_i(f(\btheta_{\epsilon}^\star,\bx_n),y_n)- c_i \right) &= 0
		\text{,}
		\label{E:comp_slack_emp}
\end{align}
\end{subequations}
known as \emph{complementary slackness} conditions. Immediately, \eqref{E:comp_slack} implies that both~\eqref{P:dual_param} and~\eqref{P:empirical_dual} reduce to
\begin{alignat*}{2}
	D_{\epsilon}^\star &= \E \left[\ell_0\left(f( \btheta_{\epsilon}^\star,\bx), y \right)\right]
		&&\triangleq F_0(\btheta_{\epsilon}^\star)
	\\
	D_{\epsilon,N}^\star &= \frac{1}{N} \sum_{n = 1}^N
			\ell_0\left(f( \btheta_{\epsilon,N}^\star, \bx_n), y_n \right)
		&&\triangleq \hat{F}_0(\btheta_{\epsilon,N}^\star)
\end{alignat*}
and the empirical gap we wish to bound can be written as
\begin{equation}\label{E:equivalentGap}
	\abs{D_{\epsilon}^\star - D_{\epsilon,N}^\star} = \abs{
		F_0(\btheta_{\epsilon}^\star) - \hat{F}_0(\btheta_{\epsilon,N}^\star)
	}
		\text{.}
\end{equation}

To proceed, use the optimality of~$\btheta_{\epsilon}^\star$ and~$\btheta_{\epsilon,N}^\star$ for~$F_0$ and~$\hat{F}_0$ respectively to write
\begin{equation*}
	F_0(\btheta_{\epsilon}^\star) - \hat{F}_0(\btheta_{\epsilon}^\star) \leq
		F_0(\btheta_{\epsilon}^\star) - \hat{F}_0(\btheta_{\epsilon,N}^\star) \leq
		F_0(\btheta_{\epsilon,N}^\star) - \hat{F}_0(\btheta_{\epsilon,N}^\star)
		\text{,}
\end{equation*}
which implies that~\eqref{E:equivalentGap} can be bounded as
\begin{multline}\label{E:gapBound}
	\abs{D_{\epsilon}^\star - D_{\epsilon,N}^\star} \leq
	\\
	\max \bigg\{
		\abs{F_0(\btheta_{\epsilon}^\star) - \hat{F}_0(\btheta_{\epsilon}^\star)},
		\abs{F_0(\btheta_{\epsilon,N}^\star) - \hat{F}_0(\btheta_{\epsilon,N}^\star)}
	\bigg\}
		\text{.}
\end{multline}
The proof now concludes by applying the classical VC generalization bound to~\eqref{E:gapBound}~\cite[Section 3.4]{vapnik2013nature}. Namely, for all~$\btheta$, it holds with probability~$1-\delta$ that
\begin{equation}\label{E:vcBound}
	\abs{F_0(\btheta) - \hat{F}_0(\btheta)} \leq V_N \triangleq 2 B
		\sqrt{\frac{1}{N} \left[ 1 + \log\left( \frac{4 (2N)^{d_{VC}}}{\delta} \right) \right]}
		\text{,}
\end{equation}
where~$d_{VC}$ is the VC dimension of~$\calP$. Since the bound is uniform over~$\btheta$, it holds also for the minimizers of~\eqref{P:dual_param} and~\eqref{P:empirical_dual}. Using~\eqref{E:vcBound} in~\eqref{E:gapBound} yields the desired bound in~\eqref{E:empiricalGap}.
\end{proof}

%%%%% END OF APPENDIX CONDITIONAL %%%%%

\bibliographystyle{IEEEbib}
\bibliography{IEEEabrv,gsp,sp,af,math,bayes,ml,library_fixed,kernel,mkl}

\end{document}